\documentclass[sigconf,nonacm]{acmart}
\usepackage{algorithm}
\usepackage{algpseudocode}
\usepackage{amsmath}
\usepackage{booktabs}
\usepackage{enumitem}
\usepackage{mathtools}
\usepackage{xspace}
\usepackage{appendix}
\usepackage{multirow}

\setcopyright{none}
\renewcommand\footnotetextcopyrightpermission[1]{}
\newcommand{\method}{\textsc{LRMoE}\xspace}

\begin{document}

\title[\method]{\method: Demand-Side Load Re-Calibration for Efficient MoE Training on Memory-Constrained Nodes}

\author{Jiaming Hu}
\affiliation{\institution{JD.com}\city{Beijing}\country{China}}
\author{Zhida Jiang}
\authornote{Zhida Jiang and Zhen Chen are co-corresponding authors.}
\affiliation{\institution{JD.com}\city{Beijing}\country{China}}
\author{Zhaolong Xing}
\affiliation{\institution{JD.com}\city{Beijing}\country{China}}
\author{Yu Li}
\affiliation{\institution{JD.com}\city{Beijing}\country{China}}
\author{Zhen Chen}
\authornotemark[1]
\affiliation{\institution{JD.com}\city{Beijing}\country{China}}
\author{Ke Zhang}
\affiliation{\institution{JD.com}\city{Beijing}\country{China}}

\begin{abstract}
Sparse Mixture-of-Experts (MoE) models offer scalable capacity for Web applications, but uneven routing creates computation and communication bottlenecks. Existing placement and replication methods redistribute this workload, yet additional expert copies compete with token-related buffers for limited GPU memory. We present \method, a memory-aware MoE training system that jointly recalibrates backward demand and balances the retained workload. During forward execution, it plans retention quotas and expert placement under resource constraints while preserving all routed computations. During backpropagation, a load recalibration metric estimates each token-expert pair's contribution to the expert update and selects pairs within these quotas. A capacity-aware routing algorithm distributes retained pairs among eligible instances of their original experts. Physical compaction then forms smaller communication payloads and variable-size grouped GEMMs, reducing backward traffic, computation, and execution-buffer demand. Experiments on Qwen3-30B-A3B and GLM4.5-106B-A12B show that \method reduces cumulative MoE-layer training time to fixed-loss targets by 21.2-44.8\% relative to SYMI.
\end{abstract}

\begin{CCSXML}
<ccs2012>
 <concept>
  <concept_id>10003033.10003079.10011672</concept_id>
  <concept_desc>Networks~Network performance analysis</concept_desc>
  <concept_significance>500</concept_significance>
 </concept>
 <concept>
  <concept_id>10010147.10010257.10010321</concept_id>
  <concept_desc>Computing methodologies~Machine learning algorithms</concept_desc>
  <concept_significance>300</concept_significance>
 </concept>
</ccs2012>
\end{CCSXML}

\ccsdesc[500]{Networks~Network performance analysis}
\ccsdesc[300]{Computing methodologies~Machine learning algorithms}
\keywords{mixture-of-experts, expert replication,
load balancing, topology-aware scheduling, distributed training}

\maketitle

\section{Introduction}
\label{sec:introduction}

Web-scale search and recommendation services rely on learned models~\cite{huang2013dssm,covington2016youtube}. Training these models is an infrastructure concern: finite GPU memory and interconnect bandwidth constrain the workloads that a cluster can support. Sparse Mixture-of-Experts (MoE) architectures increase parameter capacity while activating only a small subset of experts per token~\cite{shazeer2017moe,gshard,switch,deepspeedmoe,deepseekmoe}. For MoE-based Web services, this shifts part of the scaling challenge from dense computation to managing routed work across devices. We study this Web-infrastructure problem at the training-system level: how to coordinate expert placement, communication, and backward workload under a fixed per-GPU resource budget, rather than improve a particular search or recommendation model.

Expert Parallelism (EP) distributes experts across GPUs and exchanges routed token--expert pairs through all-to-all communication~\cite{gshard,megablocks,deepspeedmoe}. Each pair contributes to forward and backward communication, expert computation, and saved-activation storage. Input-dependent routing can concentrate work on GPUs hosting popular experts, delaying synchronous execution~\cite{topoep}. Replication distributes this work but consumes memory and introduces state-transfer and gradient-synchronization costs. Even balanced placement can therefore leave more backward work than the available memory and communication capacity can accommodate. The resulting resource-management problem concerns both where routed pairs execute and how much backward work is retained~\cite{ultraep,topoep}.

Existing systems adapt expert placement and execution to uneven loads. FasterMoE shadows popular experts, FlexMoE adapts placement, and SmartMoE searches hybrid parallelization plans~\cite{fastermoe,flexmoe,smartmoe}. SYMI decouples model parameters from optimizer states, while LAER-MoE reconstructs expert instances from sharded parameters~\cite{symi,laermoe}. UltraEP responds to realized post-gating loads, and TopoEP incorporates communication topology into planning~\cite{ultraep,topoep}. These techniques improve utilization, but replication still consumes expert storage and incurs state-transfer and gradient-synchronization costs. For a fixed admitted token--expert pair set, placement and replication change where work executes without reducing its logical computation. Replicas also compete with token buffers for memory. Under a constrained budget, redistributing that workload can therefore leave total demand above the available service capacity.

Selective-Backprop reduces backward work by prioritizing high-loss training examples~\cite{selectivebackprop}. We instead select routed token--expert pairs by their estimated contributions to the expert update. Once an MoE layer receives its incoming gradients, these can be combined with saved activations and routing weights to characterize pairwise contributions~\cite{goodfellowperexample}. At a protected expert projection, a pair's weight-gradient contribution is the outer product of its weighted incoming gradient and saved projection input. We derive the \emph{Load Recalibration Metric (LRM)} from the squared norm of this contribution, allowing pairs to be ranked without materializing individual weight-gradient matrices. LRM is a projection-level proxy for selecting backward contributions under a limited budget. Because these gradients are unavailable during forward execution, we preserve all routed forward computations and defer contribution-aware selection until backpropagation. MegaBlocks addresses irregular expert execution with block-sparse computation~\cite{megablocks}. Our selection must likewise translate into smaller physical workloads. Removing work from an unconstrained path does not accelerate execution, and reducing total work without rerouting can leave imbalances intact. The selected backward update remains approximate, so its effect on learning must be evaluated empirically rather than inferred from pairwise scores alone.

We present \method, a memory-aware MoE training system that combines contribution-aware selection with resource-constrained quota planning and load routing. Its two-phase design separates how much backward work the system can afford from which pairs fill that budget. After forward gating reveals the routed load, the controller sets retention quotas for each expert and plans replica placement. These decisions account for full forward execution and expected backward work within memory and communication limits. The full forward pass then preserves the context needed for backpropagation. Once gradients arrive, LRM selects the highest-scoring pairs within each expert's quota. Capacity-aware routing assigns retained pairs to eligible instances, and physical compaction groups their gradients and saved activations by destination and expert. The resulting dense payloads reduce backward all-to-all traffic. Variable-size grouped GEMMs use the actual retained row counts without padding workloads across experts. An inverse mapping restores gradients to the layout expected by surrounding layers. This coordination reduces backward communication, expert computation, and execution-buffer demand, while requiring selection and packing overhead to be included in evaluation. Our contributions are:
\begin{itemize}
\item We introduce a contribution-aware calibration metric and reduce backward communication and computation through grouped-GEMM workloads.
\item We develop memory-aware routing that re-balances retained workloads across eligible expert instances.
\item On Qwen3-30B-A3B and GLM4.5-106B-A12B, \method reduces cumulative MoE-layer training time to fixed-loss targets by 21.2-44.8\% versus state-to-the-art baselines, a separate GPU replay shows a 57.8\% reduction in the subtotal of expert-placement, replica, and token-buffer memory relative to fixed ownership.
\end{itemize}

\section{Background and Motivation}
\label{sec:background}

\subsection{MoE and Load Balancing}

In Transformer models~\cite{transformer}, a Mixture-of-Experts (MoE) layer replaces a dense feed-forward block with experts and a learned router.  For each token, the router activates only a small subset of experts, increasing model capacity without executing every parameter~\cite{shazeer2017moe,gshard,switch}.  Expert parallelism (EP) partitions these experts across GPUs: it dispatches each routed token to the GPU hosting its selected expert and combines the returned expert outputs~\cite{deepspeedmoe,megablocks}.  We call one token--expert selection a \emph{routed pair}. Routing makes the workload data dependent.  Pair counts change across inputs, layers, and iterations, producing unequal expert computation and all-to-all traffic across GPUs.  Because a collective and an MoE layer finish at the pace of their slowest participant, the busiest GPU becomes a straggler while others wait~\cite{fastermoe,flexmoe,smartmoe,symi,ultraep}. Training-time routing regularizers encourage balance in expectation, but cannot ensure balance for every realized layer invocation.  Runtime balancing is therefore still required for the instantaneous load. BASE layers use balanced token assignment, whereas Expert Choice fixes expert bucket sizes~\cite{baselayers,expertchoice}. Gating Dropout reduces communication by bypassing global routing for some tokens~\cite{gatingdropout}. These approaches modify forward routing, while our selection operates on backward contributions.

\subsection{Limitations of Existing Methods}

Production MoE training must accommodate model parameters, gradients, optimizer states, and runtime buffers within a fixed GPU budget. When many experts share each GPU, persistent training states leave limited memory for additional expert instances and transient execution buffers. Offloading moves training states to host memory~\cite{zerooffload,esmoe}. Input-dependent routing further concentrates demand on selected experts, so sufficient aggregate resources do not necessarily imply that every device can serve its assigned workload. Existing systems provide spatial flexibility through expert placement, replication, and adaptive parallelization~\cite{fastermoe,flexmoe,smartmoe}. EfficientMoE adapts expert scheduling to predicted loads~\cite{efficientmoe}, while NetMoE rearranges samples to exploit routing locality~\cite{netmoe}. SYMI and LAER-MoE reduce state-management costs~\cite{symi,laermoe}, while UltraEP adapts to realized post-gating loads~\cite{ultraep}. These mechanisms can improve utilization and communication locality, but they preserve the logical expert computation required by a fixed admitted token--expert pair set. Their ability to accommodate hot experts therefore remains constrained by the capacity available for executing that work. Consider a common backward service window. Let $W_{n\ell e}$ denote expert $e$'s backward demand for layer invocation $(n,\ell)$, and let $C^{\mathrm{bwd}}_{n\ell g}$ denote GPU $g$'s guarded service capacity within that window. Under an additive model in which demand and capacity are expressed in the same calibrated work units, a necessary feasibility condition is $\sum_e W_{n\ell e} \leq \sum_g C^{\mathrm{bwd}}_{n\ell g}$. Violating this bound makes the fixed workload infeasible within the window, regardless of redistribution. Satisfying it is not sufficient, however: legal expert destinations are restricted, replica and staging capacity is bounded, and per-device memory and network-path constraints must also be satisfied. A placement optimizer may therefore fail to find a feasible assignment even when aggregate service capacity is sufficient. Aggregate overload and spatial infeasibility are distinct limitations, neither can be resolved merely by making GPU loads more uniform.

\subsection{Key Insight}

Our key insight is to treat backward demand as a contribution-aware scheduling variable rather than a fixed consequence of forward routing. Calibration replaces demand $W_{n\ell e}$ with retained demand $\widetilde{W}_{n\ell e}\leq W_{n\ell e}$, prioritizing estimated update contributions under constrained capacity. Load routing distributes retained work among available expert instances. These decisions control \emph{which backward work consumes capacity} and \emph{where it executes}. Incoming gradients, saved activations, and routing weights estimate each pair's contribution to the expert update~\cite{goodfellowperexample}. The selected update is approximate, so its convergence impact must be evaluated. Selection should target overloaded expert paths, since reducing unconstrained work need not shorten execution. Retained pairs must reach eligible instances of their original experts. Physical compaction reduces communication payloads and expert-GEMM token dimensions while preserving token--expert correspondence. Masking full-sized tensors alone need not yield these savings. Net benefits must account for scoring, packing, and routing overhead and the execution efficiency of the reduced workloads.

\section{\method Design}
\label{sec:related}

\begin{figure}
    \centering

    \includegraphics[width=\linewidth]{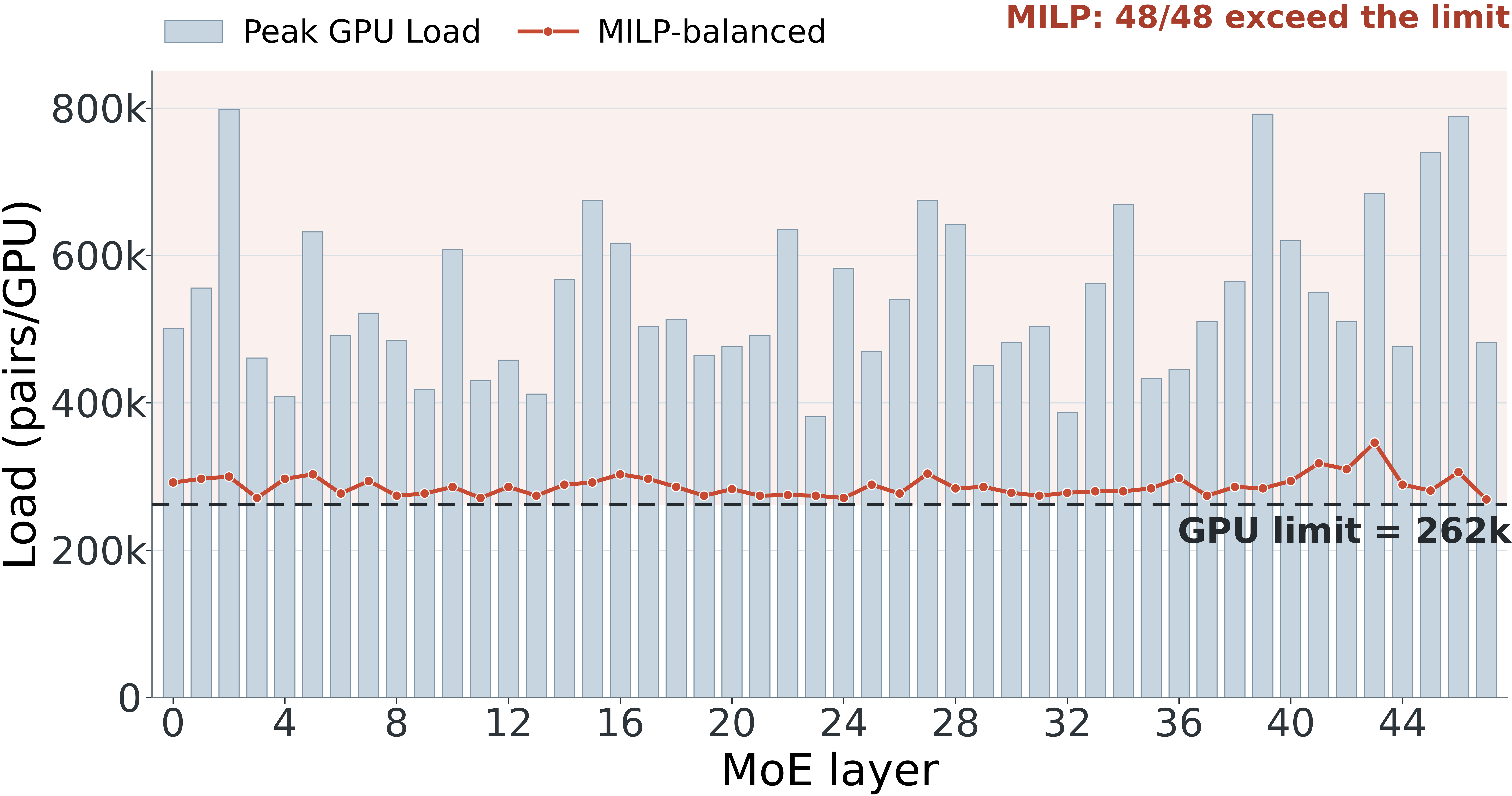}
    \caption{Uncalibrated load. All layers exceeds the maximum capacity for load balancing.}

    \label{fig:unbalanced load}
    \Description{Per-layer routed workload compared with the maximum load-balancing capacity.}
\end{figure}

\subsection{Design Overview}
\begin{figure*}
    \centering

    \includegraphics[width=1\linewidth]{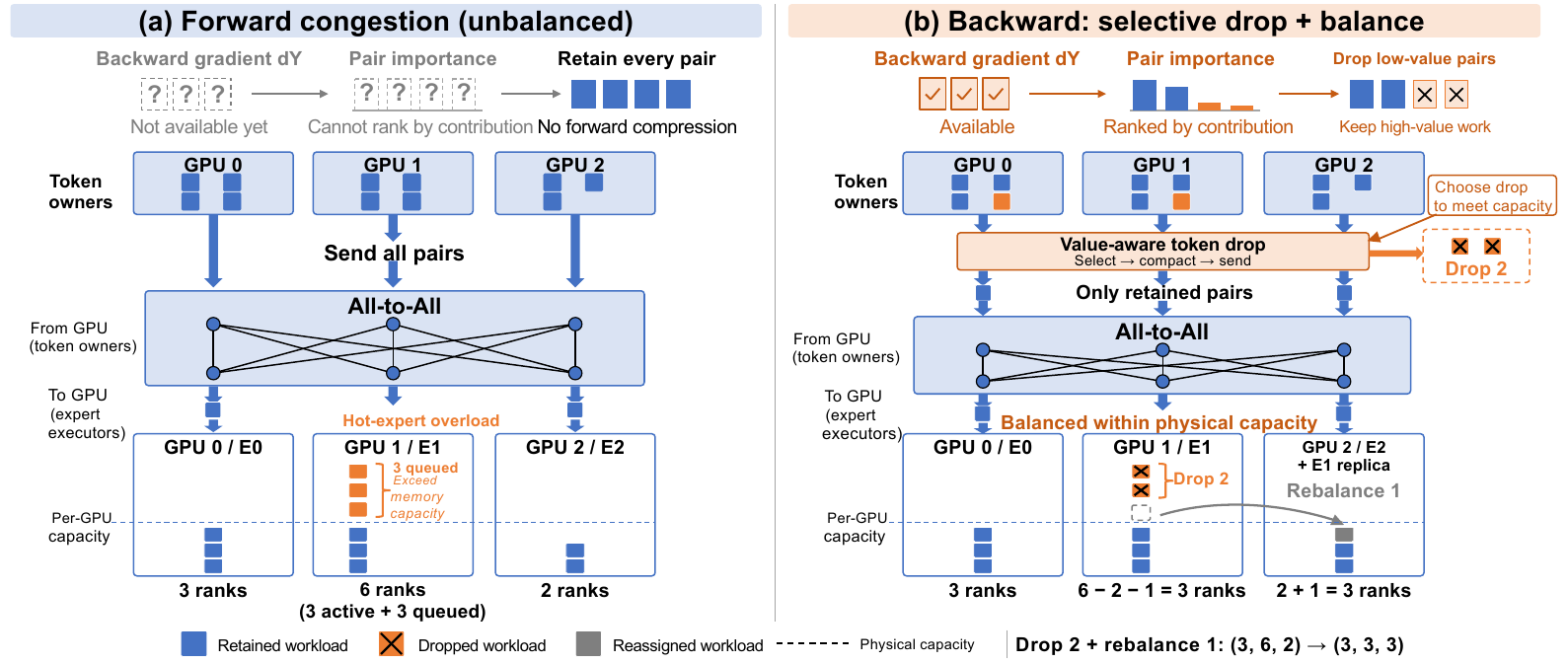}
    \caption{Selective backward load calibration for uncalibrated load (a) and calibrated load (b). In backward, \method drops low-value token-expert pairs and reassign, ensuring only retained pairs with load fitting within per-GPU capacity enter All-to-All, eliminating congestion.}
    \label{fig:LC-MoE Design}
    \Description{Comparison of uncalibrated and calibrated backward workloads. Low-value token--expert contributions are removed, and retained pairs are reassigned to fit per-GPU capacity.}
\end{figure*}

Figure~\ref{fig:LC-MoE Design} presents the design of \method, which combines layer-wise load calibration, replica-aware load routing, and compact grouped-GEMM execution. The workflow separates forward-time resource planning from backward-time contribution selection. The model router first determines each token's logical experts and routing weights, while the placement plan identifies their physical execution instances. After resource admission, the complete forward pass dispatches every routed token--expert pair, executes the expert computations, and combines their outputs into the dense representation passed to the next layer. The runtime preserves the activations, required expert intermediates, routing weights, and pair identities, together with the forward executor and parameter version, establishing the correspondence needed for subsequent backward execution.

Before expert forward begins, the controller observes the routed demand $n_e$ of each expert and determines the backward retention quotas $K_e$ under the admitted placement and guarded resource capacities. Quota planning accommodates as much backward demand as the declared capacity model permits, while accounting separately for the complete forward workload, saved-state residency, replica storage, and execution buffers. With $N=\sum_e n_e$, the retention rate is $\rho=\sum_e K_e/N$, expert $e$ omits $n_e-K_e$ backward contributions. Quotas remain fixed for the invocation. At backward Combine, LRM uses incoming gradients and saved state to select the highest-valued $K_e$ pairs per expert, without another quota search.

Load routing determines how these retained contributions occupy the admitted expert instances. During forward planning, a capacity-constrained integer allocation assigns $X_{eg}$ retained pairs of expert $e$ to GPU $g$, with $\sum_g X_{eg}=K_e$ and nonzero assignments permitted only at eligible instances of that expert. The planner minimizes the peak assigned GPU load for the fixed placement rather than dividing each expert's demand equally among replicas. During backward, rerouting realizes these instance-level quotas using the selected pair identities, preserving their logical experts and assigning every retained pair exactly once. Eligibility requires the correct expert parameters and access to the matching forward state at the recorded version. Because the selected identities are unknown during planning, the admitted state-access and communication envelope must remain valid for any selected set satisfying the quotas. Backward realization therefore neither introduces new replicas nor changes the previously admitted workload.

The compact runtime translates the retention mask and routing assignments into physical communication and computation. It gathers retained gradient rows and their matching activations and intermediates in a consistent pair order, grouping them by destination and expert instance. Counts and offsets define compact All-to-All payloads and variable-size grouped GEMMs. FC2, the activation function, FC1, and backward Dispatch use the same retained identities, omitted rows are not processed in full-sized masked tensors. Returned input-gradient contributions are scatter-added at their original token and routing-slot positions in zero-initialized dense buffers. This restores the layout, not discarded contributions, router gradients follow a separate path. Replica parameter gradients are summed at logical owners before the required accumulation and distributed reductions, one optimizer update per logical expert, and weight synchronization.

\subsection{Load Recalibration Metric}
\label{sec: lc metric}

We derive a pairwise metric that quantifies the local effect of omitting an already-routed token--expert pair from the protected backward update. Recalibration keeps the logical expert assignment and routing weight fixed, and determines which contributions remain active during backward communication and computation. To compare these contributions, we first identify the parameter-gradient contribution associated with each pair, then measure the perturbation caused by removing it. This connects the selection criterion directly to the protected expert-weight gradient rather than using routing strength or incoming gradient magnitude alone. For a pair $u=(t,e)$, the chain rule expresses its contribution to the protected projection gradient as the outer product $G_u=(\alpha_u g_t)\phi_u^\top$, where $g_t$ is the incoming gradient before multiplication by the routing weight and $\phi_u$ is the saved projection input. Omitting this pair while leaving the other contributions unchanged produces the perturbation $\Delta_u=-G_u$. We measure its magnitude using the squared Frobenius norm. Because $G_u$ is rank one, this norm factorizes into the squared norms of the weighted gradient and saved input, yielding a score that combines routing strength, backward-signal energy, and expert-state magnitude without materializing the pairwise gradient matrix. The following proposition formalizes this projection-level calibration metric.
\begin{proposition}[Token-expert calibration metric]
\label{prop:Calibration-metric}
Consider a completed forward route and a protected expert projection with
weight matrix $W_e$. For a routed token-expert pair $u=(t,e)$, let
$\alpha_u$ denote its fixed routing weight, let $g_t$ denote the gradient
arriving at the routed expert output, and let $\phi_u$ denote the saved input
to the protected projection. The contribution of pair $u$ to the weight
gradient is
\begin{equation}
G_u
=
\frac{\partial \mathcal{L}_u}{\partial W_e}
=
(\alpha_u g_t)\phi_u^\top .
\label{eq:pair-gradient-contribution}
\end{equation}
If pair $u$ is omitted while all other routed pairs remain unchanged, the
protected weight gradient changes by
\begin{equation}
\Delta_u
=
-G_u
=
-(\alpha_u g_t)\phi_u^\top .
\end{equation}
We therefore define the pairwise calibration score as the squared Frobenius
norm of this perturbation:
\begin{align}
R_u
&:=
\lVert \Delta_u \rVert_F^2 \nonumber\\
&=
\left\lVert
(\alpha_u g_t)\phi_u^\top
\right\rVert_F^2 \nonumber\\
&=
\alpha_u^2
\lVert g_t\rVert_2^2
\lVert\phi_u\rVert_2^2 .
\label{eq:Calibration-metric}
\end{align}
\end{proposition}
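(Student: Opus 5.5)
The plan has three steps: establish the outer-product form of $G_u$ by the chain rule, use linearity of the total weight gradient in the routed pairs to get $\Delta_u$, and then expand the Frobenius norm of a rank-one matrix.

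\textbf{Step 1: the pair gradient.} I would fix the completed forward route, so the routing weights $\alpha_u$ and the saved inputs $\phi_u$ are constants with respect to $W_e$. At the protected projection, pair $u=(t,e)$ produces the pre-activation $z_u=W_e\phi_u$. The pair's weighted output enters the combine step as $\alpha_u$ times the expert output. The gradient reaching $z_u$ is therefore $\alpha_u g_t$, where $g_t$ is the gradient arriving at the routed expert output before the routing-weight factor. Here I treat the protected projection as the one whose output is the expert output, i.e.\ FC2. For an inner projection such as FC1, $g_t$ would instead denote the correspondingly back-propagated signal, and I would state this interpretation explicitly.

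Writing $\mathcal{L}_u$ for the part of the loss flowing through pair $u$, the differential is $d\mathcal{L}_u=\langle \alpha_u g_t, (dW_e)\phi_u\rangle = \operatorname{tr}\big((\alpha_u g_t)\phi_u^\top dW_e^\top\big)$. Reading off the gradient gives $G_u=(\alpha_u g_t)\phi_u^\top$.

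\textbf{Step 2: the omission perturbation.} Within one layer invocation, $W_e$ is shared across pairs, and each pair's pre-activation depends linearly on $W_e$ through its own row. The total protected gradient is therefore the sum $\sum_{v} G_v$ over the routed pairs $v$ of expert $e$. This holds because the incoming gradients $g_t$ are held fixed at their realized values, which are computed before this layer's weight gradient. Removing the term for $u$ while keeping all other terms unchanged changes the sum by exactly $\Delta_u=-G_u$.

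\textbf{Step 3: the score.} For vectors $a,b$, we have $\lVert ab^\top\rVert_F^2=\sum_{i,j}a_i^2b_j^2=\lVert a\rVert_2^2\lVert b\rVert_2^2$; equivalently, $\operatorname{tr}(ba^\top ab^\top)=\lVert a\rVert^2\lVert b\rVert^2$. Applying this with $a=\alpha_u g_t$ and $b=\phi_u$, and noting that the sign of $\Delta_u$ does not affect the norm, gives $R_u=\alpha_u^2\lVert g_t\rVert_2^2\lVert\phi_u\rVert_2^2$.

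The main obstacle is Step 2, specifically justifying that the other contributions are genuinely unchanged. Omitting a pair from the backward pass could in principle alter the gradients reaching other pairs, for example through upstream dependencies. I would handle this by stating that the claim is local to the protected projection. The $g_t$ and the saved states are already realized at backward Combine, so the statement is purely about the additive decomposition of this single layer's weight gradient, not about downstream effects.
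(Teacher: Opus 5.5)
Your proposal is correct and takes essentially the same route as the paper's appendix proof: it reads $G_u=(\alpha_u g_t)\phi_u^\top$ off the trace-form differential at FC2, uses additivity $G_e=\sum_v G_v$ to get $\Delta_u=-G_u$, and applies the rank-one identity $\lVert ab^\top\rVert_F^2=\lVert a\rVert_2^2\lVert b\rVert_2^2$. Your handling of ``all other pairs unchanged'' as a local assumption about the protected projection, and your restriction to FC2, match the paper's own stated scope.
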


\begin{itemize}[leftmargin=*]
    \item \textbf{Routing strength $\alpha_u^2$.}
    This term measures how strongly token $t$ was assigned to expert $e$
    during the completed forward pass. Because routing is already fixed,
    $\alpha_u$ is treated as a constant during recalibration.

    \item \textbf{Backward signal $\lVert g_t\rVert_2^2$.}
    This term measures the magnitude of the gradient propagated from the
    subsequent computation. It distinguishes routed pairs that receive strong
    optimization signals from pairs that contribute little to the current
    update.

    \item \textbf{Expert-state magnitude $\lVert\phi_u\rVert_2^2$.}
    This term measures the magnitude of the saved projection input. Two pairs
    with similar routing weights and incoming gradients may still have
    different effects on the expert update because their saved expert states
    differ.
\end{itemize}

Equation~\ref{eq:Calibration-metric} follows from the rank-one identity $\lVert ab^\top\rVert_F^2=\lVert a\rVert_2^2\lVert b\rVert_2^2$. Thus, $R_u$ equals the squared change in the protected projection's weight gradient when pair $u$ is omitted and all other contributions remain unchanged. A larger score identifies a pair whose omission induces a larger isolated gradient perturbation. The projection-level factorization and detailed derivation are provided in Appendix~\ref{app:metric-projection}. \method separates \emph{how much} backward demand can be admitted from \emph{which} contributions should be retained. During forward planning, the resource controller chooses the largest feasible retained count $K_\ell=\sum_e K_{\ell,e}$ from its tested candidates under the quota-allocation policy, HBM budget, replica-placement constraints, and communication window, then fixes the per-expert quotas $\{K_{\ell,e}\}$. Once backward gradients become available, the metric ranks pairs within each expert, and the $K_{\ell,e}$ highest-scoring pairs fill its quota. For fixed per-expert quotas and equal row costs, this selection minimizes the additive sum of isolated omission perturbations, as established in Appendix~\ref{app:metric-selection}. The selected gradient rows and their matching saved activations are physically compacted before backward communication and grouped GEMMs, turning contribution selection into a smaller workload for load routing and execution. Merely masking full-sized tensors would not reduce their communication payloads or matrix dimensions. The selection guarantee concerns the additive local calibration objective, rather than the exact aggregate gradient error or subsequent training behavior, these distinctions and the applicable assumptions are detailed in Appendices~\ref{app:metric-aggregate} and~\ref{app:metric-scope}.

\subsection{Layerwise Load Calibration}
\label{sec:llc}

\begin{algorithm}[t]
\caption{Resource-Constrained Layer-wise Load Calibration}
\label{alg:cv_Calibration}
\begin{algorithmic}[1]
\Require Routed pairs $\mathcal P_\ell=\bigcup_e\mathcal P_{\ell,e}$,
forward placement $\Pi_\ell^F$, candidate rates $\mathcal R_\ell$,
available HBM $\{M_r^{\mathrm{avail}}\}$,
bandwidth $B_{\mathrm{eff}}$, window $T_\ell^{\mathrm{ovlp}}$
\Ensure Retained pairs $\mathcal S_\ell^*$ and backward plan $\Pi_\ell^B$
\Statex \textbf{Forward-time admission}
\State $N_{\ell,e}\gets|\mathcal P_{\ell,e}|$ for each expert $e$
\For{$\rho\in\mathcal R_\ell$ in decreasing order}
    \State $(\{K_{\ell,e}\},\Pi,\widehat{\mathbf M},
    \widehat T_{\mathrm{comm}})
    \gets\Call{PlanQuotasAndReplicas}
    {\{N_{\ell,e}\},\rho,\Pi_\ell^F,B_{\mathrm{eff}}}$
    \If{$\Pi$ is feasible, $\widehat M_r\le M_r^{\mathrm{avail}}$
    for all $r$, and $\widehat T_{\mathrm{comm}}\le T_\ell^{\mathrm{ovlp}}$}
        \State $(\rho_\ell^*,\{K_{\ell,e}^*\},\Pi_\ell^A)
        \gets(\rho,\{K_{\ell,e}\},\Pi)$
        \State \textbf{break}
    \EndIf
\EndFor
\If{$\Pi_\ell^A$ is undefined}
    \State \Return \textsc{Infeasible}
\EndIf
\State $\mathcal C_\ell
\gets\Call{FullForwardAndSaveContext}{\mathcal P_\ell,\Pi_\ell^A}$

\Statex \textbf{Backward-time calibration}
\State $v_p\gets
\alpha_p^2\lVert g_t\rVert_2^2\lVert\phi_p\rVert_2^2$
for each $p=(t,e)\in\mathcal P_\ell$
\State $\mathcal S_{\ell,e}^*
\gets\Call{TopK}{\mathcal P_{\ell,e},\{v_p\},K_{\ell,e}^*}$
for each expert $e$
\State $(\Pi_\ell^B,\{\mathcal S_{\ell,e,r}^*\})
\gets\Call{PruneAndRoute}
{\bigcup_e\mathcal S_{\ell,e}^*,\Pi_\ell^A,\mathcal C_\ell}$
\State $\{(G_{\ell,e,r},\Phi_{\ell,e,r})\}
\gets\Call{CompactAndAllToAll}
{\{\mathcal S_{\ell,e,r}^*\},\mathcal C_\ell}$
\State $(\Delta W_{\ell,e,r},D_{\ell,e,r})
\gets\Call{PackedExpertBackward}
{G_{\ell,e,r},\Phi_{\ell,e,r}}$
for each active $(e,r)$
\State $\mathcal Q_{\mathrm{CPU}}
\gets\mathcal Q_{\mathrm{CPU}}\cup
\{(e,\sum_r\Delta W_{\ell,e,r})\}_e$
\State \Call{ScatterWithZeros}
${\{D_{\ell,e,r}\},\{\mathcal S_{\ell,e,r}^*\}}$
\State \Return $(\bigcup_e\mathcal S_{\ell,e}^*,\rho_\ell^*,\Pi_\ell^B)$
\end{algorithmic}
\end{algorithm}

\begin{figure}[t]
    \centering

    \includegraphics[width=\linewidth]{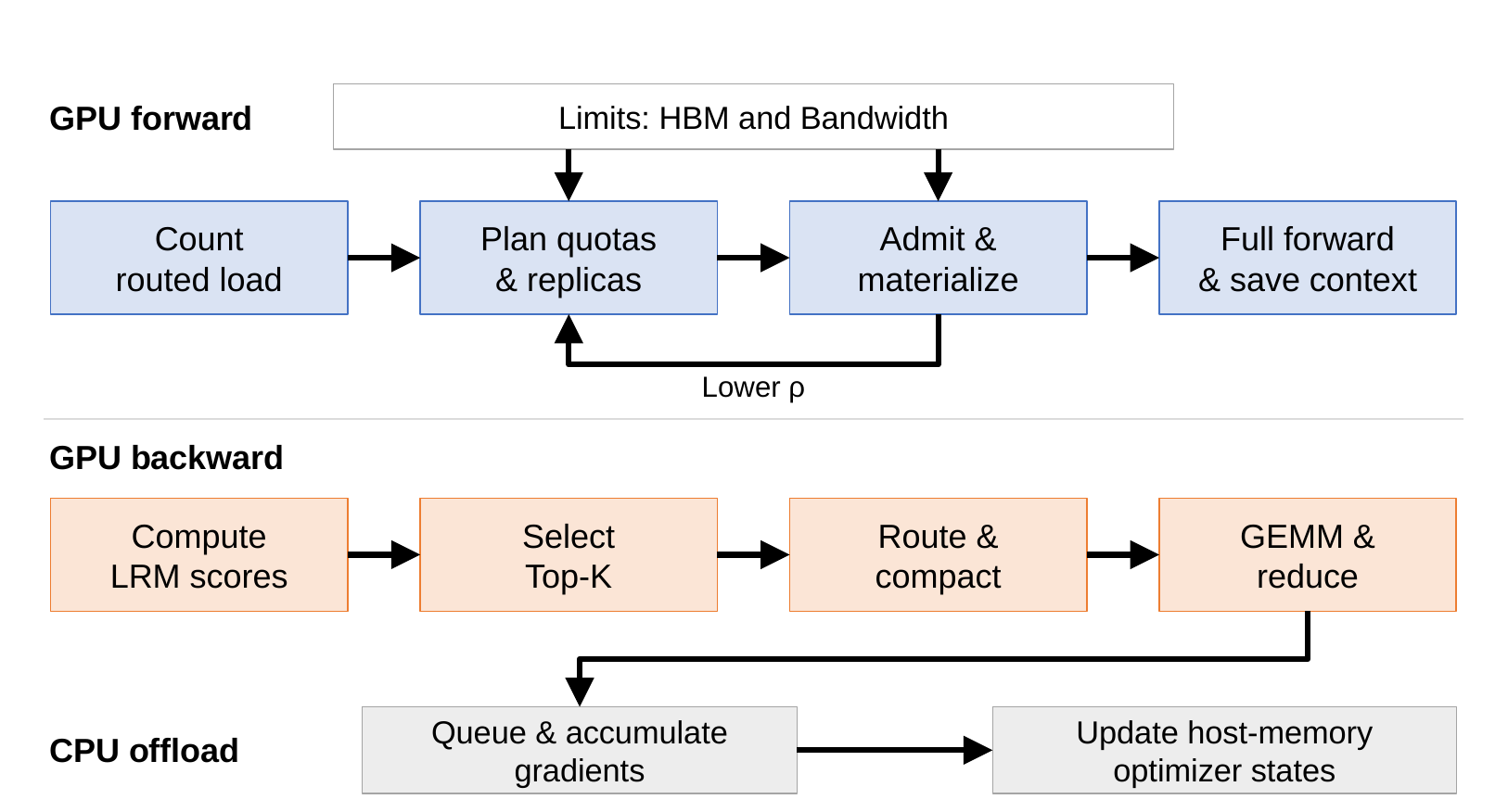}
    \caption{Two-phase Calibration workflow. Forward plans expert
    placement and retention quotas under resource constraints.
    Backward uses LRM scores to select retained token-expert pairs
    and refines the quotas to satisfy load and resource budgets.}

    \label{fig: Calibration diagram}
    \Description{Workflow with full GPU forward execution, quota planning, contribution-based backward selection, compact computation, and CPU offload.}
\end{figure}

Figure~\ref{fig: Calibration diagram} details the quota admission and execution procedure in Algorithm~\ref{alg:cv_Calibration}. For layer $\ell$, the controller counts the full routed load $N_{\ell,e}$ of each expert and allocates a candidate retained count $K_{\ell}$ into quotas satisfying $0\leq K_{\ell,e}\leq N_{\ell,e}$ and $\sum_e K_{\ell,e}=K_{\ell}$. It jointly evaluates these quotas and a proposed replica placement against the load-balancing target, GPU-memory budget, and communication window. Candidates are tested in decreasing order, starting with full retention. The first feasible candidate fixes the quotas and placement $\Pi_{\ell}^{F}$, feasibility includes the complete forward workload and its required saved state, not only the reduced backward demand. The runtime then materializes the admitted replicas, executes the full forward pass, and records pair identities, parameter versions, and backward context. If no candidate is feasible, admission fails.

These admitted quotas remain fixed when backward begins. Using the incoming gradients, saved activations, and routing weights, the controller computes the LRM score $v_p$ and selects the $K_{\ell,e}$ highest-scoring pairs within each logical expert. Per-expert selection preserves the planned load distribution, whereas an unrestricted layer-wide ranking could concentrate retained work on overloaded experts. Thus, forward planning determines the admitted counts, and backward LRM selection determines which pair identities fill them. The selected pairs then become the input to capacity-aware rerouting. Each pair is assigned exactly once to an eligible instance of its original expert within $\Pi_{\ell}^{F}$, using the retained workload and the hosting GPUs' available capacity. If $m_{\ell,e,i}$ denotes the count assigned to eligible instance $i$, routing preserves $\sum_i m_{\ell,e,i}=K_{\ell,e}$. A reassignment changes only the physical executor: the destination must provide the recorded parameter version and access to the matching saved context, with any required state movement included in the resource accounting. Rerouting therefore redistributes the selected workload without changing logical expert assignments or creating additional replicas. The resulting per-instance counts directly determine compact communication and computation. Retained gradient rows and their corresponding saved activations are packed in a consistent pair order, with counts and prefix offsets defining destination-specific payloads and expert-major tensor segments. Variable-size grouped GEMMs consume these segments: $m_{\ell,e,i}$ determines the row dimension of input-gradient products and the reduction dimension of weight-gradient products, while parameter dimensions remain unchanged. The inverse mapping restores input-gradient contributions to their original token positions, and parameter-gradient contributions are summed across instances of each logical expert before the optimizer update. In this way, the admitted quotas, selected identities, and routing assignments jointly determine the workload physically executed.

\subsection{Repacking GEMM with Load Calibration}

\begin{figure}
    \centering

    \includegraphics[width=1\linewidth]{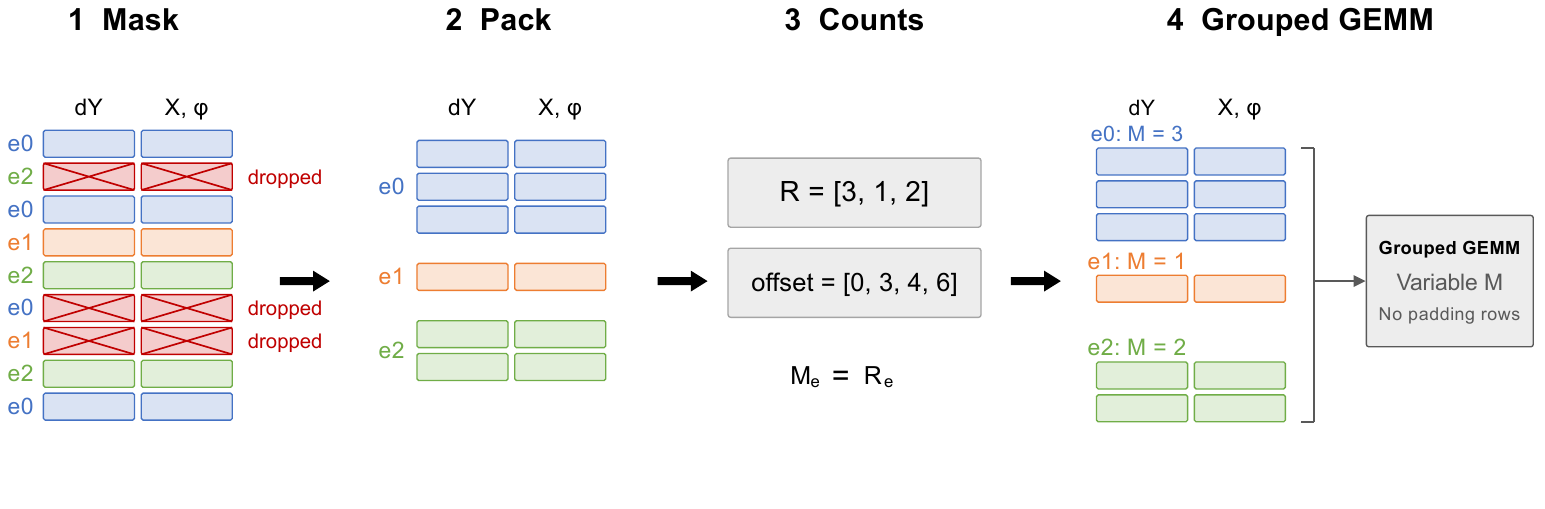}
    \caption{Expert replication reduction process. Dropped token-expert backward contributions are removed before expert-backward computation, while retained gradients and saved expert states are packed into contiguous expert segments. Their counts and offsets define variable-$M$ grouped GEMMs without padding. The reduced per-expert row counts lower both expert-backward computation and the number of physical expert replicas required to satisfy the capacity bound.}
    \label{fig:GEMM}
    \Description{Backward token--expert contributions are selected and packed into contiguous expert segments for variable-size grouped matrix multiplication.}
\end{figure}

Figure~\ref{fig:GEMM} shows how selective dropping is converted into an actual
reduction in expert computation. After identifying the retained token-expert
pairs, we remove the dropped rows and repack the corresponding gradients and
saved expert states into contiguous, expert-specific buffers. The retained
row counts and offsets are then passed to a variable-size grouped GEMM. Each
expert therefore processes only its retained rows, without computing over
zero-filled rows or padding the input back to its original size.

This distinction is essential. Zero-masking the dropped rows while preserving
the original tensor shape leaves the GEMM size and its memory requirement
largely unchanged. Repacking instead reduces the row dimension of all four
expert-backward GEMMs: the FC1 and FC2 data-gradient and weight-gradient
operations. The resulting gradients are equivalent to those produced by the
zero-masked implementation for the same retained pairs, apart from ordinary
floating-point differences. After the expert computation, the retained
results are scattered back to their original positions, and zeros are restored
only before the next dense-layer boundary.

Smaller packed GEMMs also reduce the physical capacity required for overloaded
experts. Expert replication is provisioned to accommodate the peak number of
rows assigned to each expert. Once selective dropping reduces this workload,
fewer physical instances are needed to keep the per-replica load within the
GPU capacity. In Figure~\ref{fig:GEMM}, the original expert loads are reduced
from $[4,2,3]$ to $[3,1,2]$. An expert whose retained workload now fits on one
GPU no longer requires an additional replica. The set of logical experts
remains unchanged, our method reduces only the redundant physical replicas,
along with their parameter memory, temporary buffers, and communication.

\section{\method Evaluation}

\subsection{Experimental Setup}

We evaluate MoE-layer execution on eight NVIDIA Hopper GPUs with EP8, PyTorch~2.11.0, CUDA~13.0, and BF16 precision. We use Qwen3-30B-A3B and GLM4.5-106B-A12B with a locally preprocessed Alpaca corpus, with text records concatenated and cyclically reused to construct long inputs. The sequence length is 8,192 tokens and the per-GPU microbatch size is four, yielding 32,768 source tokens per GPU. We conduct identical controlled tensors across compared methods, using 128 experts, top-8 routing, hidden dimension $H=2048$, and expert intermediate dimension $F=768$. We compare the Megatron-LM fixed-ownership strategy, native UltraEP, an independent implementation of SYMI, and \method under the same 80\,GiB per-GPU memory budget. We execute complete forward and backward passes, with 50 warm-up iterations followed by 1000 measured iterations. 

\subsection{Convergence Analysis}

\begin{figure}
    \centering

    \includegraphics[width=1\linewidth]{figures/figure5_font_only.pdf}
    \caption{PPL comparison for 10000 steps training under different recalibration rate. Both models trained with Alpaca SFT dataset, EP=8, sequence length = 8192, global batch = 16. Ck denotes omitting k\% of token--expert backward contributions before backward Combine All-to-All.}

    \label{fig:ppl 10k steps}
    \Description{Perplexity curves over 10,000 training steps for two models at different backward recalibration rates.}
\end{figure}

Figure~\ref{fig:ppl 10k steps} compares training perplexity under the uncompressed baseline (C0) and three LRM-guided recalibration settings: C10, C20, and C30, corresponding to nominal reductions of 10\%, 20\%, and 30\% in the backward token-expert workload. Both models exhibit similar overall convergence trends. The late-training region reveals the differences between recalibration settings more clearly. For Qwen3-30B-A3B, the mean PPL over the final 1,000 steps is 1.015190, 1.015156, and 1.015141 for C0, C10, and C20, respectively. Meanwhile, C30 reaches 1.016270, approximately 0.106\% above the C0 baseline. These observations confirm that recalibration up to C20 closely preserves the original training convergence, whereas C30 introduces a negligible late-training gap on Qwen. This minor gap raises a complementary systems question regarding whether the substantial savings in MoE-layer execution time outweigh the cost of any additional optimization steps required to reach the exact same loss target. We address this question directly through the rigorous loss-matched comparison below.

\subsection{Convergence Time Analysis to Target Loss}

\begin{table}[t]
  \centering
  \caption{Measured cumulative MoE-layer training time to each target loss (minutes / step), including complete forward and backward execution.
   Bold marks the minimum time in each row.}
  \label{tab:convergence}
  \small
  \begin{tabular}{lcrrr}
  \toprule
  Model & Loss & Megatron-LM & SYMI & Ours \\
  \midrule
  Qwen3-30B-A3B & 0.05 & 50.1 / 6{,}359 & 48.7 / 6{,}359 & \textbf{28.0} / 6{,}630 \\
                 & 0.02 & 66.4 / 8{,}440 & 64.6 / 8{,}440 & \textbf{36.7} / 8{,}693 \\
  \addlinespace
  \shortstack[l]{GLM4.5-\\106B-A12B}    & 2.66 & 22.8 / 2{,}898 & 22.2 / 2{,}898 & \textbf{17.5} / 4{,}151 \\
                 & 2.63 & 42.4 / 5{,}388 & 41.3 / 5{,}388 & \textbf{22.8} / 5{,}402 \\
  \bottomrule

  \end{tabular}
\end{table}

To relate the convergence behavior above to training efficiency, we compare measured cumulative MoE-layer training time at common target losses rather than at a fixed training step. This metric accumulates the time spent in complete MoE-layer execution, including forward and backward computation and communication, up to the first step at which each target loss is reached. It therefore measures the full MoE-layer training path rather than communication alone, but does not include execution outside the MoE layers or represent whole-model end-to-end training time. Taking the uncompressed trajectory as a reference, we use two target cross-entropy losses for each model: 0.05 and 0.02 for Qwen3-30B-A3B, and 2.66 and 2.63 for GLM4.5-106B-A12B. These thresholds sample two optimization levels after the initial rapid loss decrease. Both the uncompressed and C30 trajectories reach these targets within the observed training horizon. The thresholds provide common comparison points selected from Figure~\ref{fig:ppl 10k steps} for convergence. 

We primarily benchmark long-trajectory convergence against SYMI, as it represents the closest state-of-the-art framework decoupling expert training states. UltraEP primarily targets rack-scale communication primitives and hits restrictive memory boundaries under our sustained long-sequence training setup. Table~\ref{tab:convergence} reports the cumulative MoE-layer training time and the step at which each target is first reached. \method achieves the lowest cumulative time at all four targets. For Qwen3-30B-A3B, it requires 28.0 and 36.7 minutes, compared with 48.7 and 64.6 minutes for SYMI. For GLM4.5-106B-A12B, the corresponding times are 17.5 and 22.8 minutes, compared with 22.2 and 41.3 minutes. These results represent 21.2 to 44.8\% lower cumulative MoE-layer training time relative to SYMI, despite \method requiring more training steps to reach each target. The comparison thus accounts for both MoE-layer execution cost and the additional optimization steps associated with recalibration. While SYMI uses current-step expert loads to select placements, \method actively reduces the backward token to expert workload and balances the retained pairs across eligible expert instances. The measured reductions show that these execution savings substantially outweigh the additional steps at the evaluated loss targets.

\subsection{Throughput Analysis}
\begin{figure}
    \centering
    \includegraphics[width=\linewidth]
    {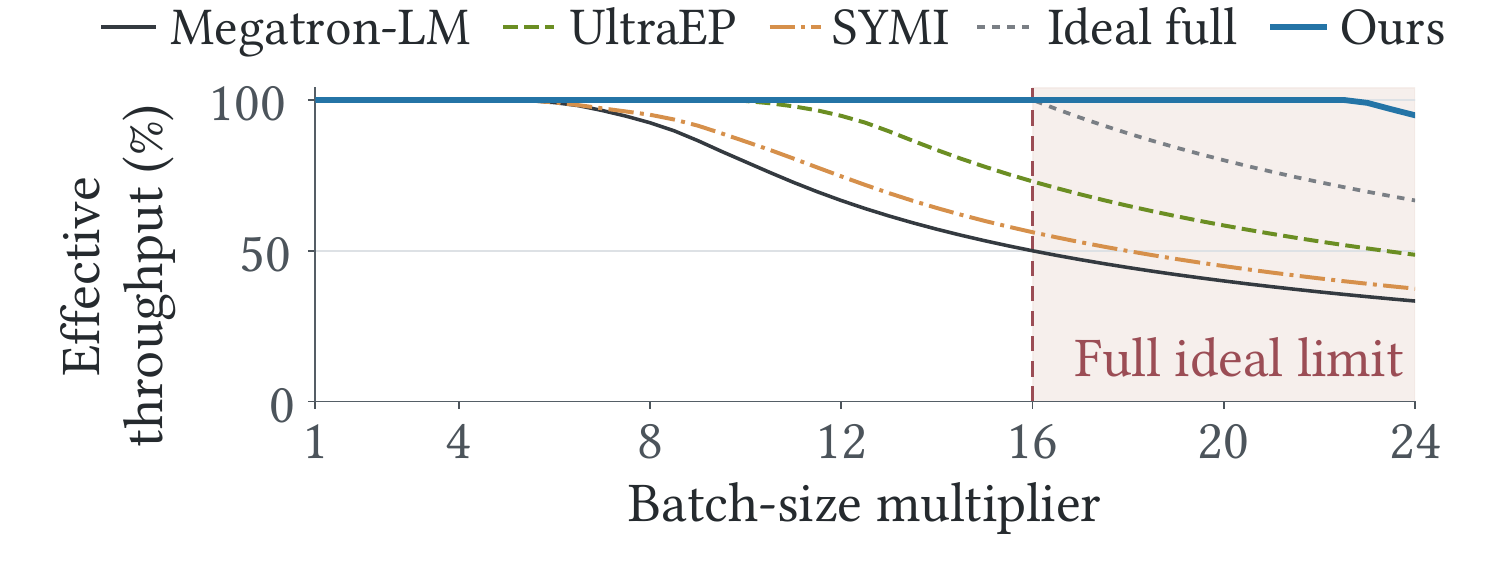}
    \caption{Effective backward throughput under increasing batch-size pressure (the base batch-size is 512 per-GPU). The vertical line marks the point at which even a perfectly balanced, uncompressed backward workload exceeds the modeled memory budget.}
    \label{fig:effective-throughput-vs-batch}
    \Description{Effective backward throughput versus batch-size pressure, with a marked modeled memory-capacity boundary.}
\end{figure}

Figure~\ref{fig:effective-throughput-vs-batch} evaluates how memory pressure affects effective throughput as batch size increases. We define effective throughput as the fraction of the offered batch whose corresponding backward workload fits within the modeled per-GPU memory budget. The experiment assumes an 80\,GiB GPU with a 64\,GiB resident footprint, leaving 16\,GiB for token buffers and additional expert replicas. Megatron-LM quickly loses throughput because its fixed placement concentrates tokens on overloaded GPUs. UltraEP and SYMI improve this placement by utilizing additional replicas, but their overall benefit remains strictly constrained by residual load imbalance and the memory required to host those replicas. The \emph{Ideal full} curve illustrates an uncompressed backward workload with perfectly balanced token traffic, optimistically charging zero memory for additional replicas. It reaches the memory limit at a batch-size multiplier of $16\times$. Beyond this threshold, placement strategies alone cannot accommodate the full backward workload within the given memory model. In contrast, our method first systematically reduces the backward token working set and then dynamically balances the retained pairs, requiring only 7 to 9 additional replicas across eight GPUs. Consequently, it preserves a 100\% effective throughput even at $20\times$, compared with 58.4\% for UltraEP, 44.9\% for SYMI, 40.0\% for Megatron-LM, and 80.0\% for the optimistic full-workload bound. This result directly characterizes the backward-capacity benefit of contribution-aware selection under the declared memory model.

\subsection{GPU Memory Breakdown}
\begin{figure}
    \centering

    \includegraphics[width=\linewidth]{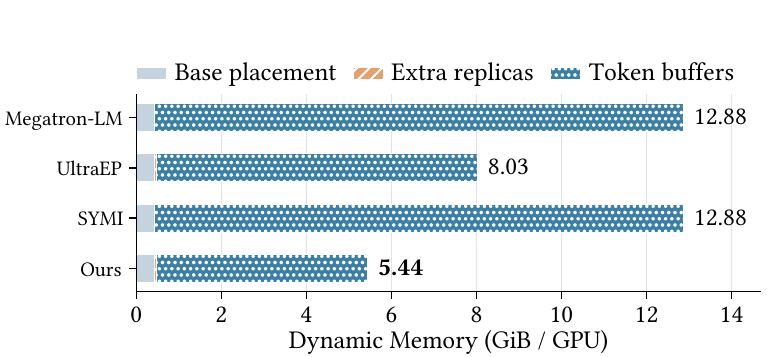}
    \caption{GPU-memory breakdown. Bars show the breakdown of each MoE layer.}
    \label{fig:moe-memory-analysis}
    \Description{Stacked bars compare memory used by base expert placement, extra replicas, and token buffers across methods.}
\end{figure}

Figure 7 separates base expert placement, additional replica storage, and token buffers. Their measured subtotal is 12.88 GiB for Megatron-LM, 8.03 GiB for UltraEP, 12.88 GiB for SYMI, and 5.44 GiB for \method. Forward load balancing distributes the complete routed workload across admitted expert instances, while backward recalibration and rerouting reduce and redistribute the retained workload. Together with compact execution, these mechanisms reduce the reported token-buffer component from 12.46 GiB under fixed ownership and 7.55 GiB under UltraEP to 4.97 GiB under \method. The three-component subtotal is therefore 57.8\% lower than Megatron-LM and 32.2\% lower than UltraEP. These reductions concern the displayed GPU allocations, not total device memory or host-memory optimizer state. These results demonstrate the concrete runtime advantages of our design. By combining contribution-aware selection with physical compaction, \method eliminates unselected rows prior to communication, directly trimming All-to-All transfer volumes. Furthermore, our variable-size grouped GEMMs execute compact per-instance workloads without incurring padding overhead. This substantial reduction along the critical GPU execution path forms the direct mechanical foundation for the overall training speedups reported in our end-to-end evaluations.

\subsection{Latency Breakdown}

\begin{figure}
    \centering

    \includegraphics[width=\linewidth]{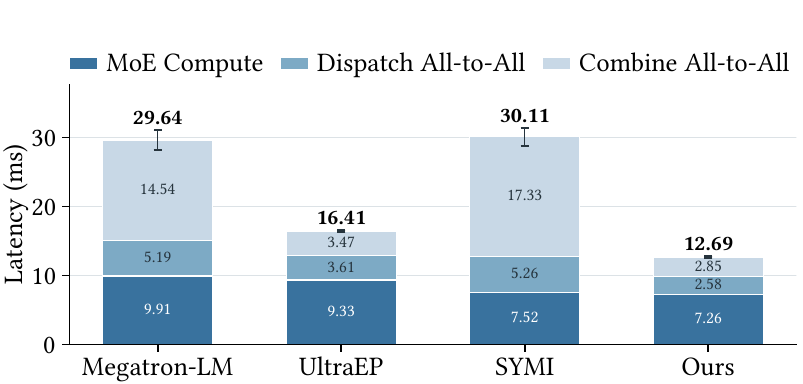}
    \caption{Measured latency per MoE-layer. Bars show MoE compute, dispatch All-to-All, and combine All-to-All latency breakdown.}
    \label{fig:latency-breakdown}
    \Description{Stacked bars compare MoE computation, dispatch All-to-All, and combine All-to-All latency across methods.}
\end{figure}

Using the same workload and resource constraints, Figure~\ref{fig:latency-breakdown} examines the computation and token-communication costs associated with these allocations. Dispatch transfers incoming gradients to expert executors, whereas combine returns input-gradient contributions to token owners. For each iteration, all three components are measured from the critical rank with the longest complete backward execution to preserve a consistent critical-path evaluation. Their mean subtotal is 29.64,ms for Megatron-LM, 16.41,ms for UltraEP, 30.11,ms for SYMI, and 12.69,ms for \method. Specifically, \method achieves an expert computation latency of 7.26,ms, a dispatch latency of 2.58,ms, and a combine latency of 2.85,ms.

These results demonstrate the concrete runtime advantages of our design. By combining contribution-aware selection with physical compaction, \method eliminates unselected rows prior to communication, directly trimming All-to-All transfer volumes. Furthermore, our variable-size grouped GEMMs execute compact per-instance workloads without incurring padding overhead. Consequently, \method reduces the backward execution subtotal by 57.2\% relative to Megatron-LM, 57.9\% relative to SYMI, and 22.7\% relative to UltraEP. This substantial reduction along the critical GPU execution path forms the direct mechanical foundation for the overall training speedups reported in Table~\ref{tab:convergence}.

\subsection{Ablation Study}
\begin{table}[t]
  \centering
  \caption{Mean $\Delta$PPL over 10000 steps.}
  \label{tab:mean-delta-ppl}
  \begin{tabular*}{\linewidth}{@{\extracolsep{\fill}}lrr@{}}
    \toprule
    Metric & Qwen3-30B-A3B & \shortstack{GLM4.5-\\106B-A12B} \\
    \midrule
    $\alpha_u^2
\lVert g_t\rVert_2^2
\lVert\phi_u\rVert_2^2$ (Ours)   & \textbf{0.012474} & \textbf{0.003464} \\
    L-2 Norm for Grad $\lVert g\rVert_2^2$   & 0.097654 & 0.011987 \\
    Absolute Value for Grad $|g|$                 & 0.107485 & 0.016662 \\
    Random Drop               & 0.775797 & 0.079282 \\
    \bottomrule
  \end{tabular*}
\end{table}

Table~\ref{tab:mean-delta-ppl} compares our LRM against alternative backward selection criteria over a sustained 10,000-step training trajectory, contrasting it with a squared gradient norm approach, an absolute gradient magnitude baseline, and a pure random selection mechanism. Our recalibration metric achieves the lowest mean $\Delta$PPL across both evaluated architectures, registering an average deviation of just 0.012474 for Qwen3-30B-A3B and an exceptionally tight 0.003464 for GLM4.5-106B-A12B. Both gradient-only metrics outperform random selection but produce larger PPL deviations than LRM. The ordering is consistent across both models: the squared gradient norm is the stronger gradient-only baseline, followed by the absolute gradient magnitude, while random selection yields the largest deviations. This empirical result strongly supports our core design hypothesis: properly selecting retained token to expert pairs requires considering the established routing strength and the saved expert-state magnitude alongside the raw backward gradient signal.

In distributed Expert Parallelism, two routed pairs can receive nearly identical upstream gradient norms but possess entirely different initial routing weights or saved activation magnitudes. A gradient-only rule assigns them identical importance, even when their actual mathematical contributions to an expert weight projection differ drastically. Our LRM accurately and efficiently accounts for these diverse factors. Crucially, it achieves this comprehensive scoring without requiring the computationally prohibitive materialization of full per-pair weight-gradient matrices. The tight PPL comparison provides strong empirical validation that this efficient rank-one local approximation preserves global model convergence and training quality over extensive training horizons.

\section{Related Work}
\label{sec:related-work}

\textbf{MoE routing and load balancing.} BASE layers balance token assignment, while Expert Choice fixes expert buckets and varies the number of selected experts per token~\cite{baselayers,expertchoice}. StableMoE distills and freezes a router to stabilize assignments~\cite{stablemoe}. Gating Dropout bypasses global routing for some tokens to reduce communication~\cite{gatingdropout}. These methods change forward routing. FasterMoE shadows popular experts and can alter expert selection to avoid network congestion~\cite{fastermoe}. FlexMoE adapts placement, while SmartMoE searches hybrid parallelization plans~\cite{flexmoe,smartmoe}. SYMI separates parameters from optimizer states, and LAER-MoE reconstructs expert instances from sharded parameters~\cite{symi,laermoe}. UltraEP and topology-aware TopoEP respond to realized post-gating loads~\cite{ultraep,topoep}. \method instead selects backward pairs under quotas and balances them across instances of their original experts.

\textbf{Selective backward computation.} Selective-Backprop prioritizes high-loss training examples~\cite{selectivebackprop}. In contrast, \method ranks routed pairs within each expert's quota. Its projection-level score uses the factorization underlying efficient per-example gradient-norm computation~\cite{goodfellowperexample}, applied to saved projection inputs and weighted incoming gradients.

\textbf{Memory and communication reduction.} ZeRO partitions training states to remove redundancy~\cite{zero}. ZeRO-Offload and ZeRO-Infinity extend memory management across GPU, CPU, and NVMe~\cite{zerooffload,zeroinfinity}. ES-MoE offloads expert parameters and pipelines execution~\cite{esmoe}. FlashAttention reduces attention HBM accesses through IO-aware tiling~\cite{flashattention}. These methods do not select expert-update contributions. RCMoE quantizes all-to-all exchanges and sparsifies all-reduce gradients~\cite{rcmoe}. Our row selection instead reduces token-dependent communication and expert computation without changing parameter dimensions.

\textbf{Parallel execution and overlap.} PipeDream pipelines minibatches across partitioned layers, while Alpa automatically plans parallel strategies~\cite{pipedream,alpa}. Within MoE layers, MegaBlocks uses block-sparse computation for irregular loads without capacity-based token dropping~\cite{megablocks}. Lancet, COMET, and FSMoE co-schedule and overlap communication with computation across different granularities~\cite{lancet,comet,fsmoe}. \method changes which backward rows enter execution and packs retained rows into contiguous expert segments. Unlike masking full-sized tensors, this compaction reduces physical GEMM row counts.

\section{Conclusion}

In this work, we present \method to tackle load imbalances and memory bottlenecks in Expert Parallelism. Unlike balancers that merely redistribute a fixed workload, \method tightly coordinates contribution-aware load recalibration with capacity-aware routing under strict memory and bandwidth constraints. It selectively retains high-value token-expert pairs to fit within planned quotas and routes them across admitted expert instances. Compact grouped GEMMs then translate this reduced workload into smaller communication payloads and lower computation costs without zero-padding overhead. By keeping the forward pass intact for accurate contribution scoring, \method advances MoE balancing by jointly controlling the size and spatial distribution of backward computation.

\bibliographystyle{ACM-Reference-Format}
\bibliography{references}
\appendixpage
\section{Derivation and Optimality of Token--Expert Recalibration}
\label{app:metric-proof}

This appendix proves Proposition~\ref{prop:Calibration-metric} and the
optimality of the resulting drop rule under the fidelity objective used by
our recalibration controller.  The first result is an exact, projection-level
identity for one routed pair.  The second result concerns selection: for a
fixed number of equal-size rows to remove, dropping the rows with the smallest
scores minimizes the sum of their isolated projection-gradient
perturbations.  We state the assumptions explicitly because this selection
result is not a claim about the nonlinear loss or the entire optimizer
trajectory.

\subsection{Pairwise projection-gradient contribution}
\label{app:metric-projection}

Consider one completed forward route and the protected output projection
(FC2) of expert $e$.  Let
\begin{equation}
  W_e\in\mathbb R^{d_{\mathrm{out}}\times d_{\mathrm{in}}},
  \qquad
  \phi_u\in\mathbb R^{d_{\mathrm{in}}},
  \qquad
  g_t\in\mathbb R^{d_{\mathrm{out}}},
\end{equation}
where $u=(t,e)$ is a routed token--expert pair.  The saved vector $\phi_u$ is
the actual input to the protected projection.  We take $g_t$ to be the
upstream cotangent before multiplication by the fixed routing weight
$\alpha_u$.  Equivalently, an implementation may expose the already weighted
cotangent
\begin{equation}
  \delta_u:=\alpha_u g_t,
  \label{eq:appendix-routed-cotangent}
\end{equation}
in that representation, the routing weight must not be applied again.

The contribution of pair $u$ to the routed expert output is
\begin{equation}
  y_u=\alpha_u W_e\phi_u.
  \label{eq:appendix-pair-output}
\end{equation}
The forward route, $\alpha_u$, and $\phi_u$ are fixed during recalibration.
For an arbitrary perturbation $dW_e$, the differential of the loss through
this pair is
\begin{align}
  d\mathcal L_u
  &=g_t^\top d y_u
    =g_t^\top\alpha_u(dW_e)\phi_u
    \nonumber\\
  &=\operatorname{tr}\!\left(
      \bigl((\alpha_u g_t)\phi_u^\top\bigr)^\top dW_e
    \right).
  \label{eq:appendix-pair-differential}
\end{align}
By the defining identity
$d\mathcal L=\operatorname{tr}((\nabla_W\mathcal L)^\top dW)$, the isolated
contribution of pair $u$ to the protected weight gradient is therefore
\begin{equation}
  G_u
  :=\left[\nabla_{W_e}\mathcal L\right]_u
  =(\alpha_u g_t)\phi_u^\top
  =\delta_u\phi_u^\top.
  \label{eq:appendix-pair-gradient}
\end{equation}

Let $\mathcal U_e$ denote all pairs contributing to the same protected
projection.  Before omission, its routed-pair gradient is
\begin{equation}
  G_e=\sum_{v\in\mathcal U_e}G_v.
\end{equation}
If only pair $u$ is omitted and reconstructed as zero while every other pair
is unchanged, then
\begin{equation}
  G_e^{(-u)}=G_e-G_u,
  \qquad
  \Delta_u:=G_e^{(-u)}-G_e=-G_u.
  \label{eq:appendix-single-omission}
\end{equation}
The sign is immaterial after taking a norm, but Eq.~\eqref{eq:appendix-single-omission}
makes clear that the metric measures a backward contribution removed from a
fixed completed route, it does not rerun or alter the forward computation.

For vectors $a$ and $b$, the squared Frobenius norm of their outer product
factorizes as
\begin{align}
  \lVert ab^\top\rVert_F^2
  &=\operatorname{tr}\!\left((ab^\top)^\top(ab^\top)\right)
    \nonumber\\
  &=\operatorname{tr}\!\left(b(a^\top a)b^\top\right)
    =(a^\top a)(b^\top b)
    =\lVert a\rVert_2^2\lVert b\rVert_2^2.
  \label{eq:appendix-rank-one-identity}
\end{align}
Applying Eq.~\eqref{eq:appendix-rank-one-identity} with
$a=\alpha_u g_t$ and $b=\phi_u$ proves
\begin{align}
  \lVert\Delta_u\rVert_F^2
  &=\left\lVert(\alpha_u g_t)\phi_u^\top\right\rVert_F^2
    \nonumber\\
  &=\alpha_u^2\lVert g_t\rVert_2^2\lVert\phi_u\rVert_2^2
    =R_u,
  \label{eq:appendix-calibration-metric}
\end{align}
which is exactly Eq.~\eqref{eq:Calibration-metric}.  Hence the three factors
in $R_u$ are not an ad hoc weighted sum: together they form the exact squared
norm of the isolated gradient contribution to the protected projection.

\subsection{Optimality of the drop rule}
\label{app:metric-selection}

Let $\mathcal U$ be the causally available token--expert pairs at the
recalibration boundary, with $N=|\mathcal U|$.  Let
$D\subseteq\mathcal U$ be the dropped set and
$S=\mathcal U\setminus D$ the retained set.  All rows at this boundary have
the same shape and encoding, and therefore the same communication cost.  For
a prescribed drop count $d$, or equivalently a retained count $K=N-d$, define
the additive local calibration distortion
\begin{equation}
  \widehat{\mathcal E}(D)
  :=\sum_{u\in D}\lVert\Delta_u\rVert_F^2
   =\sum_{u\in D}R_u.
  \label{eq:appendix-additive-drop-objective}
\end{equation}

\begin{proposition}[Optimal score-based dropping]
\label{prop:appendix-optimal-drop}
Under a fixed completed route, equal row costs, and a
fixed drop count $d$, any set containing the $d$ smallest scores $R_u$
minimizes Eq.~\eqref{eq:appendix-additive-drop-objective}.  Equivalently, any
retained set containing the $K=N-d$ largest scores maximizes preserved
calibration value.  The solution is unique when there is a strict gap between
the $d$th and $(d+1)$st scores in ascending order, otherwise every difference
between optimal solutions is confined to tied boundary scores.
\end{proposition}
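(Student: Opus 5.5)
The objective in Eq.~\eqref{eq:appendix-additive-drop-objective} is a sum of fixed nonnegative weights over the chosen set. The plan is therefore a direct exchange argument over subsets of fixed cardinality $d$. By Proposition~\ref{prop:Calibration-metric} and Eq.~\eqref{eq:appendix-calibration-metric}, each $R_u=\alpha_u^2\lVert g_t\rVert_2^2\lVert\phi_u\rVert_2^2$ is a nonnegative real determined entirely by the fixed completed route. The key point is that $R_u$ does not depend on which other pairs are dropped, so $\widehat{\mathcal E}(D)$ is modular in $D$. I will also use that equal row costs make every $D$ with $|D|=d$ admissible, so the feasible family is exactly the $d$-subsets of $\mathcal U$.

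\textbf{Step 1 (sorting).} Fix an ordering $R_{(1)}\le R_{(2)}\le\cdots\le R_{(N)}$, with ties broken arbitrarily. Let $D^\star$ be the set of the first $d$ pairs in this order. For any feasible $D$, list its scores in ascending order as $r_1\le\cdots\le r_d$. The $i$th smallest element of any $d$-subset is at least the $i$th smallest element of $\mathcal U$, so $r_i\ge R_{(i)}$ for each $i$. Summing over $i$ gives $\widehat{\mathcal E}(D)\ge\widehat{\mathcal E}(D^\star)$. The same argument applies to any set made of $d$ smallest scores under any tie-breaking, since all such sets have the same sum $\sum_{i\le d}R_{(i)}$.

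\textbf{Step 2 (complement).} Since $\sum_{u\in\mathcal U}R_u$ is constant, minimizing $\sum_{u\in D}R_u$ is equivalent to maximizing $\sum_{u\in S}R_u$ over $S=\mathcal U\setminus D$ with $|S|=K=N-d$. This gives the retained-set formulation immediately.

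\textbf{Step 3 (uniqueness and ties).} This is the only step needing care. Suppose $D$ is optimal but contains some $u$ with $R_u>R_{(d)}$. Then $D$ must omit some $v\in D^\star$ with $R_v\le R_{(d)}<R_u$. Swapping $u$ for $v$ strictly decreases the objective, which is a contradiction. Hence every optimal $D$ satisfies $\max_{u\in D}R_u\le R_{(d)}$. Combined with equality of the sums, $D$ must contain every pair with score strictly below $R_{(d)}$, and fill the remaining slots with pairs of score exactly $R_{(d)}$.

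If $R_{(d)}<R_{(d+1)}$, the pairs with score at most $R_{(d)}$ number exactly $d$, so $D=D^\star$ is unique. Otherwise optimal solutions can differ only in which pairs of the boundary score $R_{(d)}$ they include. I would state explicitly that no convexity, interaction terms, or loss-level claim is used. The result is purely about the additive surrogate, consistent with the scope caveats the paper defers to Appendices~\ref{app:metric-aggregate} and~\ref{app:metric-scope}.
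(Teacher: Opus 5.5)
Your proof is correct, but your main optimality step takes a different route from the paper. The paper works entirely by exchange. A dropped set that contains $i$ while retaining $j$ with $R_i>R_j$ is strictly improved by swapping them, so repeated swaps terminate at a set of $d$ smallest scores. Such a set admits no improving swap, and uniqueness and the tie clause are then asserted in one closing sentence.

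Your Step~1 instead gives a one-shot global lower bound from order statistics ($r_i\ge R_{(i)}$ termwise), which needs no termination argument. It also makes explicit a fact the paper's converse step uses only implicitly: every set of $d$ smallest scores has the same value $\sum_{i\le d}R_{(i)}$, so ``no improving swap'' really does imply global optimality.

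You then keep the exchange idea for Step~3, where you derive the structure of every optimal set: all pairs with score below $R_{(d)}$, plus arbitrary pairs with score exactly $R_{(d)}$. This justifies the uniqueness and tie statements more completely than the paper does. The paper's approach buys economy, since a single no-inversion lemma drives both optimality and structure. Yours separates the value bound from the structural characterization, at the cost of a slightly longer argument.

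One phrase deserves a line of expansion, namely ``combined with equality of the sums.'' Suppose some $w$ with $R_w<R_{(d)}$ were missing from $D$. Because $D\subseteq\{u:R_u\le R_{(d)}\}$ and $|D|=d$, this forces an extra pair of score $R_{(d)}$ into $D$. Then $\widehat{\mathcal E}(D)$ exceeds the optimum by at least $R_{(d)}-R_w>0$.
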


\begin{proof}
Let $D$ be any dropped set with $|D|=d$.  Suppose that $D$ contains a pair
$i$ while retaining a pair $j$ such that $R_i>R_j$.  Replacing $i$ by $j$
gives
\begin{equation}
  D'=(D\setminus\{i\})\cup\{j\},
\end{equation}
and changes the objective by
\begin{equation}
  \widehat{\mathcal E}(D')-
  \widehat{\mathcal E}(D)
  =R_j-R_i<0.
  \label{eq:appendix-drop-exchange}
\end{equation}
Thus a set with such an inversion cannot be optimal.  Repeatedly applying
this exchange terminates at a set containing the $d$ smallest scores, because
each exchange strictly decreases a finite-set objective.  Conversely, a set
containing the $d$ smallest scores has no improving exchange: every retained
score is at least as large as every dropped score.
It therefore minimizes the objective.  A strict boundary gap fixes the set,
ties yield multiple sets with the same sum.
\end{proof}

The equivalent retained-set form follows directly from
\begin{equation}
  \sum_{u\in D}R_u
  =\sum_{u\in\mathcal U}R_u-\sum_{u\in S}R_u.
  \label{eq:appendix-drop-retain-complement}
\end{equation}
The first term is constant, so minimizing dropped value is identical to
maximizing retained value.  This is the precise sense in which our drop is
optimal: once the load controller determines that $d$ equal-size rows must be
removed, no other $d$-row mask preserves more of the declared pairwise
calibration value.

When the controller imposes a fixed drop count $d_e$ separately on each
expert or overloaded path, the same proof applies independently inside every
partition: dropping the $d_e$ smallest scores in each partition minimizes the
sum of Eq.~\eqref{eq:appendix-additive-drop-objective} over all masks obeying
those cardinalities.  More general placement and capacity constraints define
a restricted feasible family $\mathcal F$.  The corresponding selection
problem is
\begin{equation}
  D^*\in\arg\min_{D\in\mathcal F}
  \sum_{u\in D}R_u.
  \label{eq:appendix-constrained-drop}
\end{equation}
For the fixed per-expert quotas used by our controller, this objective
separates across experts and is minimized by selecting the smallest scores
within each expert.

\subsection{Relation to the aggregate expert gradient}
\label{app:metric-aggregate}

The single-pair identity in
Eq.~\eqref{eq:appendix-calibration-metric} is exact.  For several dropped
pairs from the same expert, their contributions to a shared $W_e$ add.
Here let $D\subseteq\mathcal U_e$ and $d=|D|$.  The actual change in the
protected expert gradient is
\begin{equation}
  \Delta_D=-\sum_{u\in D}G_u,
\end{equation}
and therefore
\begin{align}
  \lVert\Delta_D\rVert_F^2
  &=\sum_{u\in D}R_u
    +2\sum_{\substack{u<v\\u,v\in D}}
      \langle G_u,G_v\rangle_F.
  \label{eq:appendix-aggregate-cross-terms}
\end{align}
The cross terms can be positive or negative.  Consequently,
Proposition~\ref{prop:appendix-optimal-drop} exactly optimizes the additive
sum of isolated perturbations, it does not, without further assumptions,
claim to minimize the realized norm of the summed shared-parameter gradient
error.

Two useful special cases recover an aggregate interpretation.  First, if the
dropped contributions are pairwise Frobenius-orthogonal, all cross terms in
Eq.~\eqref{eq:appendix-aggregate-cross-terms} vanish and
$\lVert\Delta_D\rVert_F^2=\widehat{\mathcal E}(D)$.  Second, if the cross terms of the selected dropped set have zero
expectation, the same equality holds in expectation.  Without either condition, the
triangle and Cauchy--Schwarz inequalities still give the deterministic bound
\begin{equation}
  \lVert\Delta_D\rVert_F^2
  \le
  \left(\sum_{u\in D}\sqrt{R_u}\right)^2
  \le d\sum_{u\in D}R_u.
  \label{eq:appendix-aggregate-bound}
\end{equation}
For fixed $d$, the smallest-score rule also minimizes the final upper bound
in Eq.~\eqref{eq:appendix-aggregate-bound}.

\subsection{Scope of the result}
\label{app:metric-scope}

The proof relies on four declared conditions.  First, the forward route and
saved projection inputs are fixed, recalibration acts only on the completed
route's backward contributions.  Second, $\phi_u$ is the actual input to the
protected linear projection.  For a nonlinear multi-layer expert, the same
formula is exact for the selected projection but is not the squared norm of
the expert's complete parameter-gradient contribution.  Third, the closed-form
drop rule assumes equal row costs and a fixed cardinality.  Unequal byte costs
lead to a knapsack problem, while replica, topology, and HBM constraints lead
to Eq.~\eqref{eq:appendix-constrained-drop}.  Fourth, the objective protects a
local projection-gradient quantity.  It is neither an identity for nonlinear
loss change nor a guarantee about future optimizer states or model quality,
those effects are evaluated empirically.

Within these conditions, the result is exact: $R_u$ is the squared isolated
perturbation caused by dropping pair $u$, and our drop rule is optimal for the
declared additive calibration objective at the required recalibration rate.

\end{document}